\let\NAT@parse\undefined
\def\BibTeX{{\rm B\kern-.05em{\sc i\kern-.025em b}\kern-.08em
		T\kern-.1667em\lower.7ex\hbox{E}\kern-.125emX}}
\DeclareMathOperator{\vect}{vec}	
\DeclarePairedDelimiter\ceiling{\lceil}{\rceil}	
\theoremstyle{plain}
\newtheorem{theorem}{Theorem}
\newtheorem{proposition}{Proposition}
\newtheorem{corollary}{Corollary}
\theoremstyle{definition}
\newtheorem{definition}{Definition}
\newtheorem{example}{Example}
\newtheorem{assumption}{Assumption}
\theoremstyle{remark}
\newtheorem{remark}{Remark}
\begin{document}

\title{%
	Global Optimality Guarantees for Nonconvex Unsupervised Video Segmentation%
	\thanks{%
		Brendon G.\ Anderson is with the Department of Mechanical Engineering at University of California, Berkeley. Email: \href{mailto:bganderson@berkeley.edu}{\texttt{bganderson@berkeley.edu}}.}%
	\thanks{%
		Somayeh Sojoudi is with the Department of Electrical Engineering and Computer Sciences, the Department of Mechanical Engineering, and the Tsinghua-Berkeley Shenzhen Institute at University of California, Berkeley. Email: \href{mailto:sojoudi@berkeley.edu}{\texttt{sojoudi@berkeley.edu}}.}%
	\thanks{%
		This work was supported by ONR Award N00014-18-1-2526, NSF Award 1808859, and AFOSR Award FA9550-19-1-0055.}%
}

\author{%
	Brendon G.\ Anderson and Somayeh Sojoudi%
}

\maketitle

\begin{abstract}
	In this paper, we consider the problem of unsupervised video object segmentation via background subtraction. Specifically, we pose the nonsemantic extraction of a video's moving objects as a nonconvex optimization problem via a sum of sparse and low-rank matrices. The resulting formulation, a nonnegative variant of robust principal component analysis, is more computationally tractable than its commonly employed convex relaxation, although not generally solvable to global optimality. In spite of this limitation, we derive intuitive and interpretable conditions on the video data under which the uniqueness and global optimality of the object segmentation are guaranteed using local search methods. We illustrate these novel optimality criteria through example segmentations using real video data.
\end{abstract}

\section{Introduction}
\label{sec: introduction}
One of the most fundamental problems in computer vision and machine learning is that of video object segmentation. In this domain, the general goal is to distinguish and extract objects of interest from the rest of the video's content. Visual segmentation algorithms take on a variety of different tasks and forms. For instance, semantic segmentation tackles the problem of assigning each extracted object to a certain cluster or predefined class, and supervised (or semi-supervised) methods are endowed with one or more ground truth extractions or annotations \cite{long_2015}. This wide range of methodologies makes video segmentation suitable for many applications, such as surveillance systems, traffic monitoring, and gesture recognition, and therefore video object segmentation remains an active and challenging area of research \cite{bouwmans_2019,perazzi_2016}.

This paper is concerned with nonsemantic and unsupervised video segmentation via background subtraction; the task of extracting moving objects from a video's background using static cameras. Traditional techniques for moving object segmentation typically use Gaussian mixture models (GMM), which offer simple models but lack robustness \cite{stauffer_1999,goyal_2018}. Neural networks have also found popularity due to the balance they strike between performance and computational efficiency \cite{culibrk_2006}. However, due to their nonconvex nature, neural networks do not generally possess guarantees on the global optimality of their resulting segmentation \cite{bouwmans_2014}.

Recently, much attention has been placed on approaches based on robust principal component analysis (RPCA), which model the video as the sum of low-rank and sparse matrices. Perhaps the most notable of these methods is Principal Component Pursuit (PCP) introduced in the seminal paper by Cand\`es et al.\ \cite{candes_2011}. Although the convexified approach in PCP provides conditions under which exact recovery of the sparse components is guaranteed, its use of lifted variables results in scalability and computational hindrances \cite{bouwmans_2014}.

In order to tackle large-scale segmentation problems, lower-dimensional nonconvex formulations such as nonnegative robust principal component analysis (NRPCA) and robust nonnegative matrix factorization (RNMF) have been proposed \cite{fattahi_2018,zhang_2011,guan_2012,bouwmans_2017}. These nonconvex approaches often permit parallelization, lending themselves to lowered computational cost and scalability to larger problems \cite{chi_2018}. Furthermore, the nonnegative nature of grayscale pixel values is explicitly embedded in modern methods like NRPCA and RNMF, unlike many of the more traditional techniques. Although these nonconvex formulations have been empirically shown to have performance on par with the popular PCP method, previous works have focused on local optimality of the resulting video segmentations, often solved for by alternating over the subproblems that are convex in the variables separately \cite{zhang_2011,guan_2012}.

In this work, we aim to supplement the strong empirical and computational properties of video segmentation via nonconvex NRPCA by providing intuitive and interpretable global optimality guarantees. These guarantees target two key aspects of moving object segmentation. First, they promise global solutions when using local search algorithms, such as stochastic gradient descent or its variants. The computational efficiency of these simple algorithms is paramount in large-scale machine learning problems \cite{bottou_2010,bottou_2012,johnson_2013}, e.g., those with high-resolution video data. Second, safety-critical video segmentation applications, such as autonomous driving \cite{ramanagopal_2018} and medical imaging \cite{zou_2012}, demand global optimality guarantees to promise consistent performance and safety margins. With the recent influx of studies on spurious local minima of nonconvex optimization problems \cite{bhojanapalli_2016,ge_2016,josz_2018,zhang_2018}, we approach this problem by exploiting new results on the benign landscape of rank-1 NRPCA \cite{fattahi_2018}. Under this framework, we propose criteria under which the video segmentation is guaranteed to be unique and globally optimal.

The remainder of this paper is structured as follows. In Section \ref{sec: problem_statement}, we describe the problem and introduce our terminology and notations. In Section \ref{sec: object_embedding}, we show that the problem can be simplified to one in which the moving objects consist of elementary shapes. Then, in Sections \ref{sec: conditions_for_connectivity} and \ref{sec: conditions_for_identifiability}, we derive conditions on video data under which global optimality guarantees can be made. Finally, we perform numerical experiments and make concluding remarks in Sections \ref{sec: numerical_experiments} and \ref{sec: conclusions}.

\section{Problem Statement}
\label{sec: problem_statement}
Throughout the paper, the nonnegative and positive orthants of $\mathbb{R}^n$ are denoted by $\mathbb{R}^n_+ = \{x\in\mathbb{R}^n : x_i \ge 0, ~ i\in\{1,2,\dots,n\}\}$ and $\mathbb{R}^n_{++} = \{x\in\mathbb{R}^n : x_i > 0, ~ i\in\{1,2,\dots,n\}\}$, respectively. We use analogous definitions for $\mathbb{Z}_+^n$ and $\mathbb{Z}_{++}^n$. Furthermore, we use the symbol $1_n$ to represent the $n$-vector of all ones.

Consider a video sequence of $d_f$ frames, each being $d_m$ pixels tall and $d_n$ pixels wide, where $d_f,d_m,d_n\in\mathbb{Z}_{++}$. We denote the video frames by the matrices $X^{(k)} \in \mathbb{R}^{d_m\times d_n}$, where $k\in K \coloneqq \{1,2,\dots,d_f\}$. By defining the \textit{pixel set} as $\Pi = \{1,2,\dots,d_m\}\times\{1,2,\dots,d_n\}$, the pixels of a grayscale video are given by
\begin{equation*}
X_{ij}^{(k)} \in \mathcal{X}\subseteq\mathbb{R}, \quad (i,j)\in\Pi,~k\in K,
\end{equation*}
where conventionally, $\mathcal{X}=\{0,1,\dots,255\}$ or $\mathcal{X}=[0,1]$. In this work, we scale the pixel values to an interval $\mathcal{X}=[X_{\textup{black}},X_{\textup{white}}]\subseteq\mathbb{R}_+$, for technical reasons explained later. Vectorizing each frame of the video, we form the data matrix
\begin{equation*}
X = \begin{bmatrix}
\vect X^{(1)} & \vect X^{(2)} & \cdots & \vect X^{(d_f)}
\end{bmatrix} \in\mathbb{R}^{m\times n},
\end{equation*}
where $m=d_md_n$ and $n=d_f$. Note that $\vect \colon \mathbb{R}^{d_m\times d_n} \to \mathbb{R}^{d_md_n}$ converts each frame into an equivalent extended vector, so that the single matrix $X$ captures all of the video's information. We also define the \textit{measurement set} as $\Omega = \{1,2,\dots,m\}\times\{1,2,\dots,n\}$.

We choose to model the video data matrix as the sum of two components. The first component is chosen to be a nonnegative rank-1 matrix, used to capture the relatively static behavior of the video's background. The second component is a sparse matrix, taken to represent the dynamic foreground (i.e., the moving objects). Under this model, we seek the decomposition
\begin{equation}
X \approx uv^{\top} + S, \label{eqn: decomposition}
\end{equation}
where $u\in\mathbb{R}_+^m$ and $v\in\mathbb{R}_+^n$, and $S\in\mathbb{R}^{m\times n}$ is sparse. This can be solved for through the following nonconvex, nonnegative $l_1$-minimization problem, termed in the literature as \textit{nonnegative robust principal component analysis} (NRPCA) \cite{fattahi_2018}:
\begin{equation}
\begin{array}{rl}
\textup{minimize} & \|X-uv^{\top}\|_1 + \lambda\left| u^{\top}u - v^{\top}v \right| \\
\textup{subject to} & u\in\mathbb{R}_+^m,~v\in\mathbb{R}_+^n.
\end{array} \label{eqn: nrpca_problem}
\end{equation}
This is a nonconvex problem that may generally have spurious local minima, i.e., those points a local search algorithm may find which do not correspond to the globally optimal solution. Note that we enforce nonnegativity of the optimization variables $u$ and $v$, yielding natural interpretations as the video's nominal background pattern and its associated scalings in each frame, respectively. Furthermore, we have added a regularization term with tuning parameter $\lambda\in\mathbb{R}_{++}$ to our formulation, since the unregularized objective is invariant to scaling. In other words, if $(u^*,v^*)$ minimizes the unregularized problem, then so will $(\alpha u^*,\frac{1}{\alpha}v^*)$ for every $\alpha \in \mathbb{R}_{++}$. Therefore, under regularization, the unique solution should be the pair $(u^*,v^*)$ for which $\|u^*\|_2=\|v^*\|_2$.

Under the decomposition (\ref{eqn: decomposition}), we define the video's \textit{background set} and \textit{foreground set} as $B = \{(h,k) \in\Omega : S_{hk}=0\}$ and $F = \Omega\setminus B$, respectively. Accordingly, two bipartite graphs can be introduced, the \textit{background graph} $\mathcal{G}_{m,n}(B)$ having edge set $B$, and the \textit{foreground graph} $\mathcal{G}_{m,n}(F)$ having edge set $F$. The first vertex set of each graph corresponds to pixel numbers: $V_u = \{1,2,\dots,m\}$. The second vertex set associates with frame numbers: $V_v = \{m+1,m+2,\dots,m+n\}$. A toy example of these graphs follows.

\begin{example}
	\label{ex: graph_examples}
	Suppose that a video has frames given by $X^{(1)} = \left[\begin{smallmatrix}
	256 & 1 \\
	256 & 1
	\end{smallmatrix}\right]$ and $X^{(2)} = \left[\begin{smallmatrix}
	1 & 256 \\
	256 & 256
	\end{smallmatrix}\right]$, where elements of 256 represent background. Then, the data matrix is
	\begin{equation*}
	X = \begin{bmatrix}
	256 & 1 \\
	256 & 256 \\
	1 & 256 \\
	1 & 256
	\end{bmatrix},
	\end{equation*}
	and the foreground and background sets are, respectively, $F = \{(3,1),(4,1),(1,2)\}$ and $B = \{1,2,3,4\}\times\{1,2\}\setminus F$. The corresponding graphs are shown in Fig.\ \ref{fig: graph_examples}.
	
	\begin{figure}[ht]
		\centering
		\subfloat[]{%
			\centering
			\begin{tikzcd}[%
				,cramped,row sep=tiny,column sep=huge,cells={nodes={circle,draw}}%
				,every arrow/.append style={dash,thick}%
				,ampersand replacement=\&]
				1 \arrow[rd] \& 5 \\
				2 \& 6 \\
				3 \arrow[ruu] \& \\
				4 \arrow[ruuu] \& \\
			\end{tikzcd}%
		}
		\hfil
		\subfloat[]{%
			\centering
			\begin{tikzcd}[%
				,cramped,row sep=tiny,column sep=huge,cells={nodes={circle,draw}}%
				,every arrow/.append style={dash,thick}%
				,ampersand replacement=\&]
				1 \arrow[r] \& 5 \\
				2 \arrow[ru] \arrow[r] \& 6 \\
				3 \arrow[ru] \& \\
				4 \arrow[ruu] \& \\
			\end{tikzcd}%
		}
		\caption{Example graphs $\mathcal{G}_{m,n}(F)$ (a), and $\mathcal{G}_{m,n}(B)$ (b).}
		\label{fig: graph_examples}
	\end{figure}
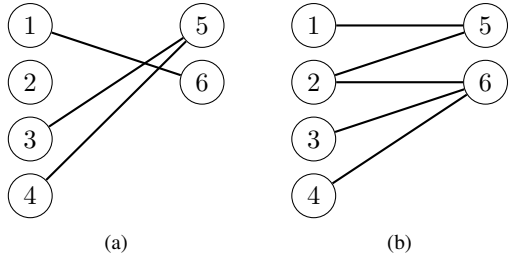
\end{example}

Now, a remarkable property of the nonconvex and nonsmooth problem (\ref{eqn: nrpca_problem}) is that, under certain conditions on the problem data, the optimization landscape is \textit{benign}, i.e., there are no spurious local minima, and the global minimum is unique \cite{fattahi_2018}. This permits the use of simple local search algorithms to solve (\ref{eqn: nrpca_problem}) to global optimality. The nonconservative sufficient conditions for benign landscape follow:
\begin{align}
\textup{Connectivity:} \quad & \mathcal{G}_{m,n}(B) \textup{ is connected}. \label{eqn: connectivity} \\
\textup{Identifiability:} \quad & \delta(\mathcal{G}_{m,n}(B)) > \frac{48}{c^2}\kappa(w^*)^4\Delta(\mathcal{G}_{m,n}(F)). \label{eqn: identifiability}
\end{align}
In these expressions, we denote the globally optimal solution of (\ref{eqn: nrpca_problem}) as $w^*=(u^*,v^*) \in\mathbb{R}^{m+n}$, the condition number (maximum element divided by minimum element) of a vector in the positive orthant as $\kappa(\cdot)$, maximum degree of a graph as $\Delta(\cdot)$, and minimum degree of a graph as $\delta(\cdot)$. The value $c$ is a constant that depends on problem data, which will be discussed in more detail later.

The problem to be addressed is as follows: \textit{When do videos satisfy the conditions (\ref{eqn: connectivity}) and (\ref{eqn: identifiability}) to guarantee a benign landscape for the optimization problem (\ref{eqn: nrpca_problem})?} In other words, the goal is to determine conditions on the size, shape, and speed of a moving object to provide theoretical guarantees for the unique and globally optimal foreground segmentation of a video. We begin by showing that the problem can be simplified to one with elementary foreground shapes through the notion of object embedding.

\section{Object Embedding}
\label{sec: object_embedding}
In this section, we consider two videos with identical backgrounds, each having one moving object (though the results are naturally generalized to multi-object videos). We are interested in the case that the moving object of one video can be completely covered by the moving object of the other video in each frame. Here is the question of interest: \textit{If the video with the larger moving object satisfies the conditions (\ref{eqn: connectivity}) and (\ref{eqn: identifiability}) for benign landscape of (\ref{eqn: nrpca_problem}), does the video with the smaller object also satisfy these conditions?} To answer this question precisely, let us start with the following definition.

\begin{definition}[Embedding]
	\label{def: embedding}
	Consider two videos $\mathcal{O}$ and $\mathcal{R}$ having the same background $uv^{\top}$, i.e., $X_{\mathcal{O}} = uv^{\top} + S_{\mathcal{O}}$ and $X_{\mathcal{R}} = uv^{\top} + S_{\mathcal{R}}$. We say that object $F_{\mathcal{O}}$ is \textit{embedded} in object $F_{\mathcal{R}}$ if the foreground of video $\mathcal{O}$ is a subset of that of video $\mathcal{R}$ in every frame; if
	\begin{equation*}
	F_{\mathcal{O}}^{(k)} \subseteq F_{\mathcal{R}}^{(k)} \textup{ for all } k\in K,
	\end{equation*}
	where $F_{\mathcal{O}}^{(k)} = \{(i,j)\in\Pi : (S_{\mathcal{O}})_{hk} \ne 0, ~ h = (j-1)d_m + i\}$, and similarly for $F_{\mathcal{R}}^{(k)}$.
\end{definition}

It is desirable to show that the answer to our earlier question is affirmative. We prove these implications in the following two propositions.

\begin{proposition}[Embedded connectivity]
	\label{prop: embedded_connectivity}
	If object $F_{\mathcal{O}}$ is embedded in object $F_{\mathcal{R}}$ and video $\mathcal{R}$ satisfies the connectivity condition (\ref{eqn: connectivity}), then video $\mathcal{O}$ also satisfies the connectivity condition.
\end{proposition}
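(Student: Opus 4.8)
The plan is to reduce the connectivity condition to a statement about edge-set inclusion of bipartite graphs on a common vertex set, and then invoke the elementary fact that adding edges to a connected graph leaves it connected.

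First I would translate the embedding hypothesis from the frame-wise description of Definition~\ref{def: embedding} into a single inclusion on the matrix-level foreground sets. Fix a frame $k\in K$. The relation $F_{\mathcal{O}}^{(k)} \subseteq F_{\mathcal{R}}^{(k)}$ states that whenever $(S_{\mathcal{O}})_{hk}\ne 0$ for $h = (j-1)d_m + i$, we also have $(S_{\mathcal{R}})_{hk}\ne 0$, i.e., every foreground measurement $(h,k)$ of $\mathcal{O}$ in that frame is a foreground measurement of $\mathcal{R}$. Since the vectorization indexing $h = (j-1)d_m+i$ is a bijection between the pixel set $\Pi$ and $\{1,\dots,m\}$, every measurement index $(h,k)\in\Omega$ corresponds to a unique pixel/frame pair, and collecting the frame-wise inclusions over all $k\in K$ yields $F_{\mathcal{O}} \subseteq F_{\mathcal{R}}$ as subsets of $\Omega$.

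Next I would take complements in $\Omega$, obtaining $B_{\mathcal{R}} = \Omega\setminus F_{\mathcal{R}} \subseteq \Omega\setminus F_{\mathcal{O}} = B_{\mathcal{O}}$. The two background graphs $\mathcal{G}_{m,n}(B_{\mathcal{R}})$ and $\mathcal{G}_{m,n}(B_{\mathcal{O}})$ share the same vertex set $V_u\cup V_v$ of cardinality $m+n$, and the edge set of the former is contained in that of the latter. Finally, connectivity is monotone under edge addition: if $\mathcal{G}_{m,n}(B_{\mathcal{R}})$ is connected, then any two vertices are joined by a path whose edges all lie in $B_{\mathcal{R}}\subseteq B_{\mathcal{O}}$, so the same path certifies connectivity in $\mathcal{G}_{m,n}(B_{\mathcal{O}})$. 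Hence video $\mathcal{O}$ satisfies the connectivity condition~(\ref{eqn: connectivity}).

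There is no substantive obstacle in this argument; the only step requiring minor care is the index bookkeeping in the first paragraph — verifying that the per-frame subset relations assemble into a single subset relation on $\Omega$ with no overlap or omission — which is immediate from the bijectivity of the vectorization map. The remainder is a one-line graph-theoretic observation.
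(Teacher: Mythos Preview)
Your argument is correct and follows essentially the same route as the paper: from the embedding hypothesis you deduce $B_{\mathcal{R}}\subseteq B_{\mathcal{O}}$, observe that $\mathcal{G}_{m,n}(B_{\mathcal{R}})$ is a spanning subgraph of $\mathcal{G}_{m,n}(B_{\mathcal{O}})$, and conclude by monotonicity of connectivity under edge addition. The only cosmetic difference is that the paper complements frame-wise in $\Pi$ before aggregating, whereas you aggregate to $F_{\mathcal{O}}\subseteq F_{\mathcal{R}}$ first and then complement in $\Omega$; both orderings yield the same inclusion $B_{\mathcal{R}}\subseteq B_{\mathcal{O}}$.
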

\begin{proof}
	Since $F_{\mathcal{O}}$ is embedded in $F_{\mathcal{R}}$, we have for all $k\in K$ that $F_{\mathcal{O}}^{(k)} \subseteq F_{\mathcal{R}}^{(k)}$, which implies $\Pi \setminus F_{\mathcal{R}}^{(k)} \subseteq \Pi \setminus F_{\mathcal{O}}^{(k)}$. This shows that the background of video $\mathcal{R}$ is a subset of that of video $\mathcal{O}$, i.e., $B_{\mathcal{R}}^{(k)} \subseteq B_{\mathcal{O}}^{(k)}$ for all $k\in K$, which gives $B_{\mathcal{R}} \subseteq B_{\mathcal{O}}$. Therefore, we have that $\mathcal{G}_{m,n}(B_{\mathcal{R}})$ is a spanning subgraph of $\mathcal{G}_{m,n}(B_{\mathcal{O}})$. Since $\mathcal{G}_{m,n}(B_{\mathcal{R}})$ is connected by our assumption, so must be $\mathcal{G}_{m,n}(B_{\mathcal{O}})$, as desired.
\end{proof}

\begin{proposition}[Embedded identifiability]
	\label{prop: embedded_identifiability}
	If object $F_{\mathcal{O}}$ is embedded in object $F_{\mathcal{R}}$ and video $\mathcal{R}$ satisfies the identifiability condition (\ref{eqn: identifiability}), then video $\mathcal{O}$ also satisfies the identifiability condition.
\end{proposition}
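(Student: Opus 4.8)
The plan is to follow the template of the proof of Proposition~\ref{prop: embedded_connectivity} and then rely on the monotonicity of the graph quantities in (\ref{eqn: identifiability}) under edge containment. From that earlier proof we already have $B_{\mathcal{R}} \subseteq B_{\mathcal{O}}$, and hence $F_{\mathcal{O}} = \Omega\setminus B_{\mathcal{O}} \subseteq \Omega\setminus B_{\mathcal{R}} = F_{\mathcal{R}}$. So, on the common vertex set $V_u\cup V_v$, the graph $\mathcal{G}_{m,n}(B_{\mathcal{O}})$ is obtained from $\mathcal{G}_{m,n}(B_{\mathcal{R}})$ by adjoining edges, and $\mathcal{G}_{m,n}(F_{\mathcal{O}})$ is obtained from $\mathcal{G}_{m,n}(F_{\mathcal{R}})$ by deleting edges.

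Next I would invoke the elementary fact that adjoining edges to a graph on a fixed vertex set cannot decrease any vertex degree, and therefore cannot decrease the minimum degree, while deleting edges cannot increase any vertex degree, and therefore cannot increase the maximum degree. This gives
\[
\delta(\mathcal{G}_{m,n}(B_{\mathcal{O}})) \ge \delta(\mathcal{G}_{m,n}(B_{\mathcal{R}})), \qquad \Delta(\mathcal{G}_{m,n}(F_{\mathcal{O}})) \le \Delta(\mathcal{G}_{m,n}(F_{\mathcal{R}})).
\]

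The remaining ingredients of (\ref{eqn: identifiability}), namely the constant $c$ and the factor $\kappa(w^*)$, are unchanged in passing from $\mathcal{R}$ to $\mathcal{O}$. By Definition~\ref{def: embedding} the two videos have the identical background $uv^{\top}$; consequently the balanced rank-$1$ representative $w^* = (u^*,v^*)$ (normalized so that $\|u^*\|_2 = \|v^*\|_2$) entering the benign-landscape conditions is the same for both, and so is $\kappa(w^*)$. Similarly $c$, as specified later in the paper, is determined by the shared background and the pixel range $\mathcal{X}$, so it too is common to the two videos. Chaining the inequalities and using the hypothesis that $\mathcal{R}$ satisfies (\ref{eqn: identifiability}),
\[
\delta(\mathcal{G}_{m,n}(B_{\mathcal{O}})) \ge \delta(\mathcal{G}_{m,n}(B_{\mathcal{R}})) > \frac{48}{c^2}\kappa(w^*)^4 \Delta(\mathcal{G}_{m,n}(F_{\mathcal{R}})) \ge \frac{48}{c^2}\kappa(w^*)^4 \Delta(\mathcal{G}_{m,n}(F_{\mathcal{O}})),
\]
which is precisely the identifiability condition for video $\mathcal{O}$.

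The degree-monotonicity part is routine; the point that needs care is the last one. One must check that $c$ and $\kappa(w^*)$ truly do not vary with the foreground --- for example, $c$ must not be something like the minimum data entry over the background support, which can only shrink as the object shrinks and would move the right-hand side the wrong way. Pinning $c$ and $w^*$ to the common background (and to $\mathcal{X}$) is thus the crux, and is what makes the implication go through.
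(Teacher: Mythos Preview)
Your argument is correct and follows the same route as the paper: from $F_{\mathcal{O}}\subseteq F_{\mathcal{R}}$ and $B_{\mathcal{R}}\subseteq B_{\mathcal{O}}$ one obtains $\Delta(\mathcal{G}_{m,n}(F_{\mathcal{O}}))\le\Delta(\mathcal{G}_{m,n}(F_{\mathcal{R}}))$ and $\delta(\mathcal{G}_{m,n}(B_{\mathcal{O}}))\ge\delta(\mathcal{G}_{m,n}(B_{\mathcal{R}}))$, and then chains the inequalities. Your explicit discussion of why $c$ and $\kappa(w^*)$ are common to the two videos is a point the paper's proof leaves implicit; your justification (both depend only on the shared background $uv^\top$ and the pixel range $\mathcal{X}$) is the right one, and your closing caveat is well placed.
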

\begin{proof}
	Since $F_{\mathcal{O}}$ is embedded in $F_{\mathcal{R}}$, we have for all $k\in K$ that $F_{\mathcal{O}}^{(k)} \subseteq F_{\mathcal{R}}^{(k)}$, which implies $F_{\mathcal{O}}\subseteq F_{\mathcal{R}}$. Hence, the maximum degrees of the foreground graphs satisfy $\Delta(\mathcal{G}_{m,n}(F_{\mathcal{O}})) \le \Delta(\mathcal{G}_{m,n}(F_{\mathcal{R}}))$. Similarly, we have that $B_{\mathcal{R}} \subseteq B_{\mathcal{O}}$, and therefore the minimum degrees of the background graphs satisfy $\delta(\mathcal{G}_{m,n}(B_{\mathcal{R}})) \le \delta(\mathcal{G}_{m,n}(B_{\mathcal{O}}))$. Combining these inequalities with the identifiability inequality for video $\mathcal{R}$ yields
	\begin{align*}
	\delta(\mathcal{G}_{m,n}(B_{\mathcal{O}})) \ge{}& \delta(\mathcal{G}_{m,n}(B_{\mathcal{R}})) \\
	>{}& \frac{48}{c^2}\kappa(w^*)^4\Delta(\mathcal{G}_{m,n}(F_{\mathcal{R}})) \\
	\ge{}& \frac{48}{c^2}\kappa(w^*)^4\Delta(\mathcal{G}_{m,n}(F_{\mathcal{O}})),
	\end{align*}
	showing video $\mathcal{O}$ also satisfies the identifiability condition.
\end{proof}

It is clear that Propositions \ref{prop: embedded_connectivity} and \ref{prop: embedded_identifiability} are independent of the size, shape, and speed of a moving object. This allows us to restrict the rest of our analysis to videos with moving objects of elementary shapes, since a more complicated object may always be embedded into a larger object which covers it. In the case that the larger, simpler object is found to satisfy the conditions (\ref{eqn: connectivity}) and (\ref{eqn: identifiability}), the results of this section show the embedded object can be extracted to unique global optimality. Therefore, we will focus on rectangular moving objects for the remainder of the paper, for convenience.

\section{Conditions for Connectivity}
\label{sec: conditions_for_connectivity}
In this section, we aim to derive necessary and sufficient criteria for a video to satisfy the connectivity condition (\ref{eqn: connectivity}). We will start by defining the notion of connected backgrounds, which will assist with streamlining the proofs in Sections \ref{sec: necessary_conditions} and \ref{sec: sufficient_conditions}, in addition to granting intuitive interpretations to the conditions that follow.

\begin{definition}[Background connectivity]
	\label{def: background_connectivity}
	Given a video with frames $k\in K$ having associated background pixel sets
	\begin{equation*}
	B^{(k)} = \{(i,j)\in\Pi : S_{hk} = 0, ~ h = (j-1)d_m + i\},
	\end{equation*}
	the video is said to have a \textit{connected background} if the following two conditions are satisfied:
	\begin{enumerate}
		\item $\cup_{k\in K}B^{(k)} = \Pi$.
		\item $B_1\cap B_2 \ne \emptyset$ for all $B_1 = \cup_{k\in K_1}B^{(k)}$ and $B_2 = \cup_{k\in K_2}B^{(k)}$ such that $K_1\cup K_2 = K$.
	\end{enumerate}
\end{definition}

We now show that having a connected background is equivalent to the video's background graph $\mathcal{G}_{m,n}(B)$ being connected; videos with connected backgrounds satisfy the connectivity condition (\ref{eqn: connectivity}). This is useful, since we will use Definition \ref{def: background_connectivity} to derive simple and intuitive necessary conditions a video must satisfy in order to have a connected background (and therefore to satisfy the connectivity condition). Afterwards, we prove a sufficient condition for background connectivity, which we claim is likely satisfied for nearly any video in practice.

\begin{proposition}[Connectivity equivalence]
	\label{prop: connectivity_equivalence}
	A video's associated background graph, $\mathcal{G}_{m,n}(B)$, is connected if and only if the video has a connected background.
\end{proposition}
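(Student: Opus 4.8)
The plan is to verify both implications directly from Definition \ref{def: background_connectivity}, exploiting that $\mathcal{G}_{m,n}(B)$ is bipartite between the pixel vertices $V_u$ and the frame vertices $V_v$, with an edge joining $h\in V_u$ to $m+k\in V_v$ exactly when $(i,j)\in B^{(k)}$ for $h=(j-1)d_m+i$. Two elementary observations will be used throughout: the first condition of Definition~\ref{def: background_connectivity} says precisely that no pixel vertex is isolated, while the second condition, applied with $K_1=\{k\}$ and $K_2=K\setminus\{k\}$, forces $B^{(k)}\neq\emptyset$ for every $k$, i.e., no frame vertex is isolated. (The degenerate cases $n=1$ and $m=1$ are checked separately and directly.)

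For the ``if'' direction I would argue by contradiction: assume the video has a connected background but $\mathcal{G}_{m,n}(B)$ admits a partition $V_u\cup V_v=C_1\sqcup C_2$ into nonempty sets with no crossing edge. Using the two observations, each $C_\ell$ must contain both a pixel vertex and a frame vertex (a lone pixel vertex drags in one of its frame neighbours via condition~1, and a lone frame vertex drags in one of its pixel neighbours). Hence $K_\ell\coloneqq\{k:m+k\in C_\ell\}$ yields a genuine bipartition $K=K_1\sqcup K_2$ into nonempty sets. Now any pixel that is background in some frame of $K_1$ is, as a vertex, adjacent to a vertex of $C_1$ and therefore lies in $C_1$; thus it has no neighbour in $C_2$ and in particular is not background in any frame of $K_2$. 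This forces $B_1\cap B_2=\emptyset$ for $B_\ell=\cup_{k\in K_\ell}B^{(k)}$, contradicting the second condition.

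For the ``only if'' direction, assume $\mathcal{G}_{m,n}(B)$ is connected. Condition~1 is immediate, since a connected graph on at least two vertices has no isolated pixel vertex. For the second condition, fix nonempty $K_1,K_2$ with $K_1\cup K_2=K$, pick $k_\ell\in K_\ell$ and a pixel $h_\ell\in B^{(k_\ell)}$ (nonempty by the observation above), and take a path in $\mathcal{G}_{m,n}(B)$ from $h_1$ to $h_2$. Reading off the pixel vertices it visits, $h_1=p_0,p_1,\dots,p_r=h_2$, consecutive vertices $p_{i-1},p_i$ are joined through a frame vertex $m+f_i$, so $p_{i-1},p_i\in B^{(f_i)}$. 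Since $p_0\in B_1$ and $p_r\in B_2$, either $p_0\in B_2$ already, or there is a least index $i$ with $p_i\in B_2$ and $p_{i-1}\notin B_2$; in that case $f_i\in K_2$ would give $p_{i-1}\in B^{(f_i)}\subseteq B_2$, a contradiction, so $f_i\in K_1$ and then $p_i\in B^{(f_i)}\subseteq B_1$, whence $p_i\in B_1\cap B_2$. Either way $B_1\cap B_2\neq\emptyset$, which is the second condition.

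I do not expect a deep obstacle; the content is bookkeeping. The parts to handle with care are the bipartite alternation along paths, the accompanying verification that neither vertex class contains an isolated vertex (and the trivial single-frame/single-pixel cases), and keeping the identification between the linear pixel index $h\in\{1,\dots,m\}$ and the coordinate $(i,j)\in\Pi$ consistent. An alternative, slightly slicker route is to introduce the auxiliary graph $H$ on vertex set $K$ with $k\sim k'$ iff $B^{(k)}\cap B^{(k')}\neq\emptyset$, observe that the second condition of Definition~\ref{def: background_connectivity} is exactly connectedness of $H$ and the first that every pixel appears in some $B^{(k)}$, and then relate connectedness of $H$ to that of $\mathcal{G}_{m,n}(B)$; but the direct path argument above seems cleanest to write.
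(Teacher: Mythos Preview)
Your argument is correct. The paper, by contrast, proves both implications by contrapositive: for necessity it assumes one of the two conditions in Definition~\ref{def: background_connectivity} fails and exhibits either an isolated vertex or a two-component split of $\mathcal{G}_{m,n}(B)$; for sufficiency it assumes $\mathcal{G}_{m,n}(B)$ is disconnected and, after separating off the isolated-vertex case, reads a partition $K=K_1\sqcup K_2$ with $B_1\cap B_2=\emptyset$ directly from the component structure. Your ``if'' direction is essentially the contrapositive of the paper's sufficiency argument, repackaged as a proof by contradiction and with the isolated-vertex case absorbed into the observation that each $C_\ell$ must contain both pixel and frame vertices. Your ``only if'' direction is genuinely different in flavour: rather than arguing from a component decomposition, you trace a path between two chosen pixel vertices and locate the first crossing into $B_2$, which is a clean and constructive way to produce a common background pixel and makes the role of the bipartite alternation explicit.

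One small wording issue: in the ``only if'' direction, the nonemptiness of $B^{(k_\ell)}$ should be justified by the connectedness of $\mathcal{G}_{m,n}(B)$ (no isolated frame vertex on a connected graph with at least two vertices), not by ``the observation above,'' since that observation derived $B^{(k)}\neq\emptyset$ from condition~2, which is precisely what you are proving at that point.
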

\begin{proof}
	The proof will proceed via contrapositive argument. We will first prove necessity.
	
	\subsubsection*{Necessity}
	Suppose that a video does not have a connected background. Then, one of the two following cases must hold:
	\begin{enumerate}
		\item $\cup_{k\in K} B^{(k)} \ne \Pi$.
		\item There exist $B_1=\cup_{k\in K_1}B^{(k)}$ and $B_2=\cup_{k\in K_2}B^{(k)}$, where $K_1\cup K_2 = K$, such that $B_1\cap B_2 = \emptyset$.
	\end{enumerate}
	Assume that the first case holds. Then, there exists a pixel $(i_0,j_0)\in\Pi$ such that $(i_0,j_0)\notin B^{(k)}$ for all $k\in K$. Therefore, we have $S_{h_0 k} \ne 0$ where $h_0=(j_0-1)d_m+i_0$, which implies
	\begin{equation*}
	(h_0,k)\notin B \textup{ for all } k\in K.
	\end{equation*}
	This shows that vertex $h_0 \in V_u$ has no incident edges in $\mathcal{G}_{m,n}(B)$, and therefore the graph is disconnected.
	
	Now, assume that the second case holds. We first note that $K_1\cap K_2 = \emptyset$, since otherwise $B_1$ and $B_2$ cannot be disjoint. Now, $B_1\cap B_2 = \emptyset$ implies that for all pixels $(i,j)\in\Pi$, either $(i,j)\in B_1$ and $(i,j)\notin B_2$, or $(i,j)\in B_2$ and $(i,j)\notin B_1$, or $(i,j)\notin B_1$ and $(i,j)\notin B_2$. In the trivial case that some pixel $(i_0,j_0)$ is neither an element of $B_1$ nor an element of $B_2$, then $\cup_{k\in K}B^{(k)}=B_1\cup B_2 \ne \Pi$, and the first case above shows that the graph $\mathcal{G}_{m,n}(B)$ is disconnected. For pixels $(i,j)\in B_1$, we have $(i,j)\notin B^{(k)}$ for all $k\in K_2$, and therefore $S_{hk} \ne 0$ where $h = (j-1)d_m + i$, which implies
	\begin{equation*}
	(h,k)\notin B \textup{ for all } k\in K_2.
	\end{equation*}
	This shows that vertex $h\in V_u$ is not adjacent to vertex $(m+k)\in V_v$ for all $k\in K_2$. Similarly, one can show that for each $(i,j)\in B_2$, the corresponding vertex $h\in V_u$ is not adjacent to vertex $(m+k)\in V_v$ for all $k\in K_1$. Since $B_1\cap B_2=\emptyset$ and $K_1\cap K_2=\emptyset$, the bipartite graph $\mathcal{G}_{m,n}(B)$ contains at least two connected components, defined by the disjoint edge sets $\mathcal{E}_1 \subseteq \{(h,m+k) : h=(j-1)d_m+i, ~ (i,j)\in B_1,~k\in K_1\}$ and $\mathcal{E}_2 \subseteq \{(h,m+k) : h=(j-1)d_m+i,~ (i,j)\in B_2,~k\in K_2\}$. Therefore, the graph is disconnected.
	
	\subsubsection*{Sufficiency}
	Suppose that a video's associated background graph, $\mathcal{G}_{m,n}(B)$, is disconnected. Then, one of the two following cases must hold:
	\begin{enumerate}
		\item There exists a vertex with no incident edges.
		\item Every vertex has at least one incident edge.
	\end{enumerate}
	Assume that the first case holds. Then, either the isolated vertex corresponds to a pixel number $h_0$ or to a frame number $k_0$. If vertex $h_0 \in V_u$ is isolated, then $(h_0,k)\notin B$ for all $k\in K$. This implies $S_{h_0k}\ne 0$ and therefore $(i_0,j_0)\notin B^{(k)}$ for all $k\in K$, where
	\begin{equation*}
	(i_0,j_0) = \left(h_0-\left(\ceiling*{\frac{h_0}{d_m}}-1\right) d_m,\ceiling*{\frac{h_0}{d_m}}\right).
	\end{equation*}
	(Here, $\ceiling{\cdot}$ represents the ceiling operator. This formula comes from the one-to-one correspondence between a pixel $(i,j)$ and its pixel number $h$ through the vectorization of a given video frame.) Thus, $(i_0,j_0)\notin \cup_{k\in K}B^{(k)}$, which implies $\cup_{k\in K}B^{(k)} \ne \Pi$. Hence, the video does not have a connected background. On the other hand, if vertex $k_0\in K$ is isolated, then $(h,k_0)\notin B$ for all $h\in V_u$. This implies $S_{hk_0}\ne 0$ and therefore $(i,j)\notin B^{(k_0)}$ for all $(i,j)\in\Pi$. Thus, $B^{(k_0)}=\emptyset$. Define $B_1 = B^{(k_0)}$ and $B_2 = \cup_{k\in K\setminus \{k_0\}}B^{(k)}$. Then $B_1\cap B_2=\emptyset$, so again the video does not have a connected background.
	
	Now, assume the second case holds. Then, the graph contains at least two nontrivial connected components. Therefore, the set $B$, which defines the edge set of the graph, can be partitioned as $B = Q_1\cup Q_2$, where $Q_1 = \{(h,k) \in\Omega : S_{hk}=0,~h\in H_1,~k\in K_1\}$ and $Q_2 = \{(h,k) \in\Omega : S_{hk}=0,~h\in H_2,~k\in K_2\}$ are nonempty, such that $H_1= V_u\setminus H_2$ and $K_1 = K\setminus K_2$. Now, define $B_1 = \cup_{k\in K_1}B^{(k)}$. This gives
	\begin{align*}
	B_1 ={}& \cup_{k\in K_1}\{(i,j) \in \Pi : S_{hk}=0, ~ h=(j-1)d_m+i\} \\
	={}& \{(i,j) \in \Pi : S_{hk}=0, ~ h=(j-1)d_m+i, ~ k\in K_1\}.
	\end{align*}
	Now, from the partitions $Q_1$ and $Q_2$ we see that a frame $k\in K_1$ has $S_{hk}=0$ only for pixel numbers $h\in H_1$. Thus, $B_1$ can be written equivalently as
	\begin{multline*}
	B_1 = \{(i,j) \in \Pi : S_{hk}=0, \\
	h=(j-1)d_m+i, ~ k\in K_1, ~h \in H_1\}.
	\end{multline*}
	Similarly, it can be shown that by defining $B_2=\cup_{k\in K_2}B^{(k)}$, we obtain
	\begin{multline*}
	B_2 = \{(i,j) \in \Pi : S_{hk}=0, \\
	h=(j-1)d_m+i, ~ k\in K_2, ~h \in H_2\}.
	\end{multline*}
	Since $H_1\cap H_2=\emptyset$ and $K_1\cap K_2=\emptyset$, we immediately see that $B_1\cap B_2=\emptyset$. Therefore, the video does not have a connected background.
\end{proof}

Proposition \ref{prop: connectivity_equivalence} shows that the connectivity of the graph $\mathcal{G}_{m,n}(B)$ is entirely dictated by whether or not a video has a connected background. Therefore, we can use the notion of background connectivity to derive intuitive and meaningful criteria a video should satisfy in order to meet the connectivity condition (\ref{eqn: connectivity}).

\subsection{Necessary Conditions for Connectivity}
\label{sec: necessary_conditions}
From Definition \ref{def: background_connectivity}, we develop three necessary conditions for background connectivity of a video, which are intuitively interpretable in terms of properties of the video (i.e., properties of pixels and frames). These necessary conditions give simple methods for showing when a video does not have a connected background, in which case no guarantees on the global optimality of the minimization (\ref{eqn: nrpca_problem}) can be made.

\begin{proposition}[Object size]
	\label{prop: object_size}
	If a video has a connected background, then there are at most $d_md_nd_f - (d_md_n+d_f-1)$ foreground pixels in the data matrix $X$.
\end{proposition}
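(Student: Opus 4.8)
The plan is to count edges in the background graph. First, I would note that the data matrix $X$ has exactly $mn = d_m d_n d_f$ entries, and that the measurement set $\Omega$ is partitioned by definition into the background set $B = \{(h,k)\in\Omega : S_{hk}=0\}$ and the foreground set $F = \Omega\setminus B$, so that $|B| + |F| = d_m d_n d_f$. The quantity to be bounded, namely the number of foreground pixels in $X$, is precisely $|F|$, the number of nonzero entries of $S$.

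Next, I would invoke Proposition~\ref{prop: connectivity_equivalence}: because the video has a connected background, the bipartite graph $\mathcal{G}_{m,n}(B)$ is connected. This graph has vertex set $V_u \cup V_v$ of cardinality $m + n = d_m d_n + d_f$, and its edge set is exactly $B$. Using the elementary fact that any connected graph on $N$ vertices has at least $N - 1$ edges (it contains a spanning tree), I obtain $|B| \ge m + n - 1 = d_m d_n + d_f - 1$.

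Finally, I would combine these two facts: $|F| = d_m d_n d_f - |B| \le d_m d_n d_f - (d_m d_n + d_f - 1)$, which is exactly the stated bound.

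The argument is short, and I do not anticipate a genuine obstacle. The only points requiring minor care are (i) correctly identifying ``foreground pixels in the data matrix'' with the index set $F \subseteq \Omega$ (i.e., entries of $X$), rather than with distinct spatial pixel locations; and (ii) applying the spanning-tree edge lower bound to the whole graph $\mathcal{G}_{m,n}(B)$, which is legitimate precisely because Proposition~\ref{prop: connectivity_equivalence} certifies connectivity of that graph under the hypothesis of a connected background.
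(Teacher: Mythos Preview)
Your proposal is correct and follows essentially the same argument as the paper: use connectivity of $\mathcal{G}_{m,n}(B)$ to lower-bound $|B|$ by $m+n-1$ via the spanning-tree edge count, then obtain $|F| = mn - |B| \le d_md_nd_f - (d_md_n+d_f-1)$. Your explicit appeal to Proposition~\ref{prop: connectivity_equivalence} to pass from ``connected background'' to ``$\mathcal{G}_{m,n}(B)$ is connected'' is a point the paper's proof leaves implicit, but otherwise the arguments are identical.
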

\begin{proof}
	Since the background graph $\mathcal{G}_{m,n}(B)$ has $m+n=d_md_n+d_f$ vertices and is connected, the number of edges $\left|B\right|$ is at least $d_md_n+d_f-1$. Therefore, $\left|F\right| = mn-\left|B\right| \le d_md_nd_f - (d_md_n+d_f-1)$.
\end{proof}

For an instance in which the upper bound given by Proposition \ref{prop: object_size} is tight, yet background connectivity is still achieved, see Example \ref{ex: graph_examples}. Perhaps the most interesting implication of this result comes from the following corollary.

\begin{corollary}
	\label{cor: resolution}
	As the video resolution and number of frames increase, the maximum relative size of recognizable objects increases.
\end{corollary}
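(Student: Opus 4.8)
The plan is to turn the informal phrase ``maximum relative size of recognizable objects'' into a concrete ratio and then reduce the corollary to an elementary monotonicity statement. By Proposition \ref{prop: object_size}, any video with a connected background has at most
\begin{equation*}
N(d_m,d_n,d_f) \coloneqq d_md_nd_f - (d_md_n + d_f - 1)
\end{equation*}
foreground pixels, whereas the data matrix $X$ contains $mn = d_md_nd_f$ entries in all. I would therefore define the maximum relative object size as $\rho \coloneqq N/(mn)$, i.e., the largest fraction of $X$ a moving foreground may occupy without already violating the necessary size condition, so that the claim becomes: $\rho$ is increasing in each of $d_m$, $d_n$, and $d_f$.

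The key step is an algebraic simplification. Writing $p = d_md_n$ for the number of pixels per frame and $f = d_f$ for the number of frames,
\begin{equation*}
\rho = 1 - \frac{d_md_n + d_f - 1}{d_md_nd_f} = 1 - \frac{1}{f} - \frac{1}{p} + \frac{1}{pf} = \left(1 - \frac{1}{p}\right)\left(1 - \frac{1}{f}\right).
\end{equation*}
Since the map $t \mapsto 1 - 1/t$ is strictly increasing on $(0,\infty)$ and $p = d_md_n$ is itself strictly increasing in $d_m$ and in $d_n$, each factor strictly increases when the corresponding spatial dimension grows, and likewise $1 - 1/f$ strictly increases with $d_f$. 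Hence $\rho$ is strictly increasing in each of $d_m$, $d_n$, $d_f$ (the other two held fixed), and moreover $\rho \to 1$ as $\min\{d_m,d_n,d_f\} \to \infty$. This is exactly the corollary, and the factored form additionally quantifies that in the high-resolution, many-frame regime a recognizable object may cover almost the entire video.

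I expect the only real difficulty to be interpretive rather than computational: one must commit to a precise meaning for ``recognizable'' (admissible under the necessary bound of Proposition \ref{prop: object_size}) and for ``relative size'' (a fraction of the $mn$ entries of $X$), after which the conclusion is immediate from the factorization. I would also remark that the statement is robust to this choice: if one instead measures an object's footprint per frame and normalizes by $d_md_n$, the same computation shows the per-frame budget $1 - 1/f$ grows with the number of frames, so the monotonicity is not an artifact of the particular normalization adopted.
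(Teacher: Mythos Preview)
Your argument is correct, but it differs from the paper's in two respects. First, the paper normalizes by the minimum number of \emph{background} entries rather than by the total number of entries, defining
\[
p_{\max} \;=\; \frac{d_md_nd_f-(d_md_n+d_f-1)}{d_md_n+d_f-1},
\]
i.e., a foreground-to-background ratio rather than your foreground-to-total ratio $\rho$. Second, instead of establishing monotonicity directly, the paper simply computes the iterated limits $\lim_{d_md_n\to\infty}p_{\max}=d_f-1$ and then $\lim_{d_f\to\infty}(d_f-1)=\infty$, reading ``increases'' as ``grows without bound.'' Your factorization $\rho=(1-1/p)(1-1/f)$ is a cleaner route: it yields genuine monotonicity in each variable (not just asymptotic growth), makes the symmetry between spatial and temporal dimensions transparent, and gives the sharp limit $\rho\to 1$. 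The paper's normalization has the minor advantage that $p_{\max}\to\infty$ dramatizes the gain more vividly, but your version is the more informative statement and, as you note, the conclusion is robust to the choice of denominator.
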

\begin{proof}
	Since there can be at most $d_md_nd_f - (d_md_n + d_f - 1)$ foreground pixels across all frames of the video, the maximum relative size of an object can be expressed as
	\begin{equation*}
	p_{\textup{max}} \coloneqq \frac{d_md_nd_f - (d_md_n + d_f - 1)}{d_md_n + d_f - 1}.
	\end{equation*}
	As the resolution of the video increases, $d_md_n\to\infty$, and therefore
	\begin{equation*}
	\lim_{d_md_n\to\infty} p_{\textup{max}} = d_f - 1.
	\end{equation*}
	Furthermore, as the length of the video increases, $d_f\to\infty$, and therefore
	\begin{equation*}
	\lim_{\substack{d_md_n\to\infty \\ d_f \to\infty}} p_{\textup{max}} = \infty.
	\end{equation*}
	Thus, we see that the maximum permissible ratio of foreground pixels to background pixels increases with the video's resolution and number of frames, as desired.
\end{proof}

Interestingly, the maximum relative object size $p_{\textup{max}}$ also shows us that with $d_f=1$ frame (i.e., a single picture), the largest recognizable object size decreases to $p_{\textup{max}} = 0$. On the other hand, with $d_md_n = 1$ (i.e., a single pixel resolution), the largest recognizable object again decreases to $p_{\textup{max}}=0$. In other words, we cannot recognize moving objects with only one frame, even with infinite resolution, and we also cannot recognize objects with only one pixel, even with infinitely many frames. Both of these observations align with the restrictions on video properties one would expect.

\begin{proposition}[Frame connectivity]
	\label{prop: frame_connectivity}
	If a video has a connected background, then each frame contains at least one background pixel.
\end{proposition}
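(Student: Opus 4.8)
The plan is to derive the statement directly from Definition~\ref{def: background_connectivity} by contraposition. Suppose some frame $k_0 \in K$ contains no background pixel, so that $B^{(k_0)} = \emptyset$; I will show the video fails to have a connected background, which contradicts the hypothesis.

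The key step is to exhibit a violation of condition~2 of Definition~\ref{def: background_connectivity} by choosing the index sets $K_1 = \{k_0\}$ and $K_2 = K$. These satisfy $K_1 \cup K_2 = K$ (and condition~2 does not require $K_1$ and $K_2$ to be disjoint or nonempty, which is the one admissibility point to check). Then $B_1 = \cup_{k \in K_1} B^{(k)} = B^{(k_0)} = \emptyset$, so $B_1 \cap B_2 = \emptyset$ for this choice of $B_2$, contradicting condition~2. Hence the video does not have a connected background. In the degenerate single-frame case one could alternatively appeal to condition~1, since then $\cup_{k\in K} B^{(k)} = B^{(k_0)} = \emptyset \ne \Pi$.

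An alternative route is to use Proposition~\ref{prop: connectivity_equivalence} and argue on the graph $\mathcal{G}_{m,n}(B)$ instead: since this bipartite graph has $m+n = d_md_n + d_f \ge 2$ vertices and is connected, every vertex has degree at least one; in particular the frame vertex $m+k_0 \in V_v$ has an incident edge $(h, m+k_0) \in B$, which by definition of $B$ means $S_{hk_0} = 0$, i.e., the pixel corresponding to $h$ lies in $B^{(k_0)}$, so $B^{(k_0)} \ne \emptyset$. I do not anticipate a substantive obstacle here, as the result is essentially an unpacking of the defining conditions; the only things needing a moment's care are confirming that the chosen $(K_1,K_2)$ is admissible in condition~2 and handling the trivial $d_f = 1$ case, both dispatched above.
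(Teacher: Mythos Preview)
Your proposal is correct and follows essentially the same contrapositive argument as the paper: assume some frame $k_0$ has $B^{(k_0)}=\emptyset$ and exhibit a violation of condition~2 of Definition~\ref{def: background_connectivity}. The only cosmetic difference is that the paper takes $K_2 = K\setminus\{k_0\}$ whereas you take $K_2 = K$; both choices satisfy $K_1\cup K_2 = K$ and yield $B_1\cap B_2=\emptyset$, so the core argument is identical.
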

\begin{proof}
	Suppose that there exists a frame $k_0\in K$ that contains no background pixels, i.e., $B^{(k_0)} = \emptyset$. Then, the video's background pixel sets can be partitioned as $B_1 = B^{(k_0)} = \emptyset$ and $B_2 =  \cup_{k\in K \setminus \{k_0\}}B^{(k)}$. Thus, $B_1\cap B_2 = \emptyset$, and therefore the video does not have a connected background.
\end{proof}

Proposition \ref{prop: frame_connectivity} can be interpreted as the requirement that an object can at no point cover the entirety of the frame. This matches intuition, since a moving object surely cannot be uniquely segmented from its background in these types of frames. A similar necessary condition on the obscurement of pixels, rather than frames, is given in Proposition \ref{prop: pixel_connectivity} that follows.

\begin{proposition}[Pixel connectivity]
	\label{prop: pixel_connectivity}
	If a video has a connected background, then each pixel is a background pixel in at least one frame.
\end{proposition}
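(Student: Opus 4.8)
The plan is to argue by contrapositive, mirroring the proof of Proposition~\ref{prop: frame_connectivity} but exchanging the roles of pixels and frames. In fact, the statement is essentially a restatement of the first requirement in Definition~\ref{def: background_connectivity}, namely $\cup_{k\in K}B^{(k)} = \Pi$, so the argument should be short and self-contained.

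First I would suppose that the conclusion fails: there is a pixel $(i_0,j_0)\in\Pi$ that is a foreground pixel in every frame, i.e., $(i_0,j_0)\notin B^{(k)}$ for all $k\in K$. Taking the union over $k$ immediately gives $(i_0,j_0)\notin\cup_{k\in K}B^{(k)}$, and since $(i_0,j_0)\in\Pi$ this yields $\cup_{k\in K}B^{(k)}\subsetneq\Pi$. This is exactly a violation of the first condition in Definition~\ref{def: background_connectivity}, so the video does not have a connected background. Taking the contrapositive establishes the proposition. (Alternatively, one could translate the pixel $(i_0,j_0)$ to its pixel number $h_0 = (j_0-1)d_m + i_0$, observe that $(h_0,k)\notin B$ for all $k\in K$ so that vertex $h_0\in V_u$ is isolated in $\mathcal{G}_{m,n}(B)$, and invoke Proposition~\ref{prop: connectivity_equivalence}; but routing through Definition~\ref{def: background_connectivity} directly is cleaner.)

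There is essentially no obstacle here: the only point worth a sentence of care is the elementary observation that $(i_0,j_0)\in\Pi$ (so that the missing pixel genuinely witnesses $\cup_{k\in K}B^{(k)}\ne\Pi$), and that the translation between pixel coordinates and pixel numbers is the same bijection already used throughout Section~\ref{sec: conditions_for_connectivity}. As with Proposition~\ref{prop: frame_connectivity}, I would close with the interpretation that no single pixel may be permanently obscured by moving objects across the entire video, which matches the intuition that such a pixel carries no recoverable background information.
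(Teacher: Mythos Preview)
Your proposal is correct and essentially identical to the paper's own proof: both argue by contrapositive, assume a pixel $(i_0,j_0)$ is foreground in every frame, conclude $(i_0,j_0)\notin\cup_{k\in K}B^{(k)}$ so that $\cup_{k\in K}B^{(k)}\ne\Pi$, and hence the first condition of Definition~\ref{def: background_connectivity} fails. The alternative graph-based route you mention is not used in the paper but is also valid.
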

\begin{proof}
	Suppose that there exists a pixel $(i_0,j_0)\in\Pi$ that is a foreground pixel for all frames $k\in K$. Then, $(i_0,j_0)\notin B^{(k)}$ for all $k\in K$. Thus, $(i_0,j_0)\notin \cup_{k\in K}B^{(k)}$, which implies $\cup_{k\in K}B^{(k)} \ne \Pi$, and therefore the video does not have a connected background.
\end{proof}

Proposition \ref{prop: pixel_connectivity} shows that if any single pixel remains as part of the foreground throughout the video's duration, we cannot guarantee benign landscape of (\ref{eqn: nrpca_problem}). This makes sense intuitively: if part of the background remains obscured throughout the video's entirety, it appears implausible to guarantee unique and globally optimal recovery of that part of the background.

\subsection{Sufficient Conditions for Connectivity}
\label{sec: sufficient_conditions}
The necessary conditions derived in Section \ref{sec: necessary_conditions} are most useful in determining when the global optimality guarantees for (\ref{eqn: nrpca_problem}) \textit{fail} to hold. In this section, we reverse the implications to derive a simple and relatively relaxed sufficient condition for ensuring the graph $\mathcal{G}_{m,n}(B)$ is connected. This leads to our first main result.

\begin{theorem}[Common background pixel]
	\label{thm: common_background_pixel}
	Suppose that each pixel of a video is a background pixel in at least one frame. If any single pixel is a background pixel in all frames of the video, then the video has a connected background.
\end{theorem}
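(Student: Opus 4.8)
The plan is to unwind Definition~\ref{def: background_connectivity} and check its two bullet points one at a time, since the theorem's hypotheses are tailored to match them almost verbatim.

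First I would dispose of condition~(1) of Definition~\ref{def: background_connectivity}: the inclusion $\cup_{k\in K}B^{(k)}\subseteq\Pi$ is automatic, and the reverse inclusion $\Pi\subseteq\cup_{k\in K}B^{(k)}$ is precisely the standing hypothesis that every pixel of the video is a background pixel in at least one frame. So condition~(1) requires no real work.

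Next I would handle condition~(2). Let $(i_0,j_0)\in\Pi$ be a pixel that is a background pixel in all frames, so $(i_0,j_0)\in B^{(k)}$ for every $k\in K$; such a pixel exists by the extra hypothesis of the theorem. Given any index sets $K_1,K_2$ with $K_1\cup K_2=K$, set $B_1=\cup_{k\in K_1}B^{(k)}$ and $B_2=\cup_{k\in K_2}B^{(k)}$; picking representatives $k_1\in K_1$ and $k_2\in K_2$ gives $(i_0,j_0)\in B^{(k_1)}\subseteq B_1$ and $(i_0,j_0)\in B^{(k_2)}\subseteq B_2$, hence $(i_0,j_0)\in B_1\cap B_2$ and condition~(2) holds. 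With both conditions verified, the video has a connected background; by Proposition~\ref{prop: connectivity_equivalence} this is equivalent to $\mathcal{G}_{m,n}(B)$ being connected, i.e., to the connectivity condition~(\ref{eqn: connectivity}).

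There is no genuinely hard step here — the argument is essentially a one-line observation that a pixel lying in \emph{every} $B^{(k)}$ must lie in whichever sub-union of the $B^{(k)}$'s the definition points at. The only point deserving care is the implicit requirement in Definition~\ref{def: background_connectivity} that the index sets $K_1,K_2$ be nonempty (otherwise, e.g.\ $K_1=\emptyset$ would force $B_1=\emptyset$ and make condition~(2) unsatisfiable for \emph{any} video); granting this, the representatives $k_1\in K_1$ and $k_2\in K_2$ used above are guaranteed to exist, and the proof goes through.
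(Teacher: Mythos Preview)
Your proof is correct and follows essentially the same route as the paper: verify condition~(1) of Definition~\ref{def: background_connectivity} directly from the first hypothesis, then use the common background pixel $(i_0,j_0)$ to witness nonemptiness of $B_1\cap B_2$ for any admissible $K_1,K_2$. Your observation about the implicit nonemptiness of $K_1,K_2$ is a fair point of rigor that the paper leaves unaddressed.
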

\begin{proof}
	Since each pixel in the video is assumed to be a background pixel in at least one frame, we have that for every $(i,j)\in\Pi$, there exists a $k\in K$ such that $(i,j)\in B^{(k)}$. This implies $\cup_{k\in K}B^{(k)} = \Pi$, so the video satisfies the first condition for background connectivity.
	
	Now, suppose that there exists a pixel $(i_0,j_0)\in\Pi$ such that $(i_0,j_0)$ is a background pixel in all frames of the video. Furthermore, assume that the background pixels are partitioned as $B_1 = \cup_{k\in K_1}B^{(k)}$ and $B_2 = \cup_{k\in K_2}B^{(k)}$, where $K_1$ and $K_2$ are any two arbitrary subsets of $K$ such that $K_1\cup K_2 = K$. Since $(i_0,j_0)\in B^{(k)}$ for all $k\in K$, it must be that $(i_0,j_0)\in B_1$ and $(i_0,j_0)\in B_2$, and therefore $B_1\cap B_2 \ne \emptyset$. Since $B_1$ and $B_2$ are arbitrary partitions, the video satisfies the second condition for background connectivity. Thus, the video has a connected background.
\end{proof}

The sufficient condition given in Theorem \ref{thm: common_background_pixel} is relaxed in the sense that many videos satisfy the property of having at least one common background pixel among all frames. These common background pixels are often found in the corners of a video, away from the ``action'' of the moving objects. Therefore, with the prior knowledge that a single pixel remains unobscured by the moving objects throughout the duration of the video, the connectedness of the video's background (and therefore the connectedness of $\mathcal{G}_{m,n}(B)$) comes at only the price of ensuring that no single pixel is obscured by foreground throughout the video's entirety. This property is instantiated later in the example of Section \ref{sec: numerical_experiments}. We now focus our attention on the identifiability inequality (\ref{eqn: identifiability}).

\section{Conditions for Identifiability}
\label{sec: conditions_for_identifiability}
Recall the identifiability condition (\ref{eqn: identifiability}). The goal of this section is to determine what properties a video and its moving objects must possess in order to satisfy this condition. We make the following assumptions.

\begin{assumption}
	\label{ass: conditions_for_identifiability}
	As supported by Propositions \ref{prop: embedded_connectivity} and \ref{prop: embedded_identifiability}, we assume that the foreground $F$ is a $p_m\times p_n$ rectangle, and that at least one frame contains the entire object. Furthermore, we define $p_f\in\mathbb{Z}_{++}$ to be the maximum number of frames any pixel is obscured by the object. (Note that this can be directly computed for a variety of simple trajectories; see Remark \ref{rem: constant_trajectory}.) We assume that there exists a black pixel in at least one frame, i.e., $X_{hk}=X_{\textup{black}}$ for some $(h,k)\in\Omega$, and that $[X_{\textup{black}},X_{\textup{white}}]\subseteq\mathbb{R}_{++}$. We also assume $\|u^*\|_2=\|v^*\|_2$, as motivated in Section \ref{sec: problem_statement}. Additionally, we take $v^* = v_0^* 1_n$ for some $v_0^*\in\mathbb{R}_{++}$, which holds when the background remains constant through the video's duration, and approximately holds when the illumination variance is small enough. Finally, we set $d_f=d_md_n$, which turns out to be a key assumption for deriving bounds on $\kappa(w^*)$.
\end{assumption}

\begin{remark}(Data preprocessing)
	\label{rem: data_preprocessing}
	Various preprocessing techniques can be used to ensure the assumptions on the problem data. For instance, shifting each pixel value by $\Delta X\in\mathbb{R}_{++}$ ensures that $[X_{\textup{black}},X_{\textup{white}}] = [\Delta X,255 + \Delta X]\subseteq\mathbb{R}_{++}$. Furthermore, in a high-resolution video, we will typically find that $d_f<d_md_n$. In this case, the equality $d_f=d_md_n$ can be achieved by either repeating the video to increase the overall length $d_f$, or by compressing the video to lower the resolution $d_md_n$. The first approach is beneficial in the case that full-resolution video is needed, whereas the second approach lowers the problem dimension and speeds up computation. To appropriately rescale the resolution, one may set the new frame dimensions to $d_m' = \beta d_m$ and $d_n'=\beta d_n$, where $\beta = \sqrt{\frac{d_f}{d_md_n}}$. This is the approach we take in the experiments in Section \ref{sec: numerical_experiments}.
\end{remark}

We now prove the main result of this section.

\begin{theorem}[Rectangle identifiability]
	\label{thm: rectangle_identifiability}
	Suppose that a video satisfies Assumption \ref{ass: conditions_for_identifiability}. Then the video satisfies the identifiability condition (\ref{eqn: identifiability}) if and only if
	\begin{equation}
	\begin{aligned}
	p_f <{}& \frac{1}{c_0}d_f, \\
	p_mp_n <{}& \frac{1}{c_0}d_md_n,
	\end{aligned} \label{eqn: conditions_for_identifiability_system}
	\end{equation}
	where $c_0=49$.
\end{theorem}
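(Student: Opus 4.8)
The strategy is to translate the identifiability inequality $\delta(\mathcal{G}_{m,n}(B)) > \frac{48}{c^2}\kappa(w^*)^4\Delta(\mathcal{G}_{m,n}(F))$ into statements purely about the rectangle parameters $p_m, p_n, p_f$ and the video dimensions $d_m, d_n, d_f$, using the structural assumptions. I would proceed in four steps.

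\emph{Step 1: Compute the graph degrees.} Because the foreground is a $p_m \times p_n$ rectangle and $p_f$ is the maximum number of frames any pixel is obscured, the vertex degrees are explicit. A pixel-vertex $h \in V_u$ has $\mathcal{G}_{m,n}(F)$-degree equal to the number of frames in which that pixel lies in the object, which is at most $p_f$; the frame-vertex degrees are at most $p_m p_n$ (the object size). Hence $\Delta(\mathcal{G}_{m,n}(F)) = \max\{p_f, p_m p_n\}$. Dually, in the background graph a pixel-vertex has degree $d_f$ minus its foreground degree, so $\delta(\mathcal{G}_{m,n}(B)) = \min\{d_f - p_f, \, d_m d_n - p_m p_n\}$, using $n = d_f$ and $m = d_m d_n$. (One should check, as part of the "if" direction, that these are genuinely the extremal degrees and that the assumption that some frame contains the whole object makes the $p_m p_n$ bound attained.)

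\emph{Step 2: Pin down $\kappa(w^*)$.} This is the crux. Under Assumption~\ref{ass: conditions_for_identifiability} we have $v^* = v_0^* 1_n$, so $\kappa(v^*) = 1$ and $\kappa(w^*) = \kappa(u^*)$. I would argue that at the global optimum the background rank-1 term $u^* (v^*)^\top$ must match the true constant background on the (large) background set, forcing $u^*$ to be proportional to the background pixel pattern; combined with the scaling normalization $\|u^*\|_2 = \|v^*\|_2$ and the pixel range $[X_{\textup{black}}, X_{\textup{white}}] \subseteq \mathbb{R}_{++}$, the entries of $u^*$ range over a bounded set whose condition number is $X_{\textup{white}} / X_{\textup{black}}$ — and the existence of a black pixel in some frame makes the minimum entry exactly $X_{\textup{black}}$ up to the scaling. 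The assumption $d_f = d_m d_n$ (i.e.\ $n = m$) is what makes the normalization $\|u^*\|_2 = \|v^*\|_2$ collapse cleanly, yielding $v_0^* = 1$ and $u^*$ equal to the background pattern itself, so that $\kappa(w^*)^4$ is a data constant. Folding $\frac{48}{c^2}\kappa(w^*)^4$ into a single constant and showing it equals $c_0 = 49$ is presumably where the constant $c$ (the earlier "problem-data constant") and a small slack get absorbed; I expect this is the most delicate bookkeeping and the step most likely to need a lemma or an explicit reference to the bound on $c$ from \cite{fattahi_2018}.

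\emph{Step 3: Assemble the inequality.} With Steps 1--2 in hand, (\ref{eqn: identifiability}) becomes
\begin{equation*}
\min\{d_f - p_f, \, d_m d_n - p_m p_n\} > c_0 \max\{p_f, \, p_m p_n\}.
\end{equation*}
Since $c_0 \ge 1$, a $\min > c_0 \cdot \max$ inequality splits coordinatewise: it holds if and only if $d_f - p_f > c_0 p_f$ and $d_m d_n - p_m p_n > c_0 p_m p_n$ both hold. Rearranging gives exactly $p_f < \frac{1}{c_0} d_f$ (after noting $1 + c_0$ versus $c_0$ — here I would double-check whether the intended constant is $c_0$ or $c_0 + 1$ and confirm the stated $c_0 = 49$ is consistent with absorbing the "$-1$" additive terms and the edge-count slack $d_m d_n + d_f - 1$ into the bound) and $p_m p_n < \frac{1}{c_0} d_m d_n$, which is (\ref{eqn: conditions_for_identifiability_system}).

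\emph{Step 4: The converse.} The "only if" direction is immediate by reversing Step 3: if (\ref{eqn: conditions_for_identifiability_system}) fails, one of the two coordinatewise inequalities fails, hence the $\min > c_0 \max$ inequality fails, hence (\ref{eqn: identifiability}) fails. The only subtlety is ensuring the degree formulas in Step 1 are \emph{exact} (not just upper/lower bounds) so that the equivalence is genuinely two-sided; this is where the assumption that at least one frame contains the entire object, and the definition of $p_f$ as a \emph{maximum} obscuration count, are used to guarantee the extremal degrees are attained. The main obstacle, as noted, is Step 2 — rigorously identifying $\kappa(w^*)$ as a data constant and tracking the numerical constant down to $c_0 = 49$.
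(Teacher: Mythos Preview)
Your Step~1 and overall four-step structure match the paper, but there are two genuine gaps. In Step~3, the claim that $\min\{d_f-p_f,\,d_md_n-p_mp_n\}>c_0\max\{p_f,\,p_mp_n\}$ ``splits coordinatewise'' into only $d_f-p_f>c_0 p_f$ and $d_md_n-p_mp_n>c_0 p_mp_n$ is false in general (try $d_f-p_f=100$, $d_md_n-p_mp_n=1000$, $p_f=1$, $p_mp_n=50$, $c_0=10$: both coordinatewise inequalities hold yet $\min=100<500=c_0\max$). The paper instead invokes $d_f=d_md_n\eqqcolon d$ \emph{at this step}, so that $\min\{d-p_f,\,d-p_mp_n\}=d-\max\{p_f,p_mp_n\}$; then $d-M>48M\iff d>49M$ gives exactly the two inequalities in (\ref{eqn: conditions_for_identifiability_system}) and explains $c_0=48+1=49$, resolving your parenthetical uncertainty about the constant.

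In Step~2 the mechanism is different from what you sketch. First, $\kappa(v^*)=1$ alone does not yield $\kappa(w^*)=\kappa(u^*)$; you need $u_{\textup{min}}^*\le v_0^*\le u_{\textup{max}}^*$, which the paper obtains from $\|u^*\|_2=\|v^*\|_2$ together with $d_f=d_md_n$ (it does not conclude $v_0^*=1$). Second, the paper does not fold $\frac{48}{c^2}\kappa(w^*)^4$ into $c_0$ as a fixed data constant: it shows $1\le\kappa(w^*)\le X_{\textup{white}}/X_{\textup{black}}=1+255/\Delta X$ and takes the pixel shift $\Delta X$ large so that $\kappa(w^*)\downarrow 1$; separately, it performs a case analysis on the augmented matrix $\bar{X}=\bar{S}+w^*w^{*\top}$ (using the black-pixel assumption to get $u_{\textup{min}}^*v_0^*=X_{\textup{black}}$) to verify $\bar{X}_{\bar{h}\bar{k}}\ge w_{\textup{min}}^{*2}$ for all $\bar{h},\bar{k}$, whence $c\uparrow 1$. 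Only after both limits does the coefficient become exactly $48$, after which the Step~3 algebra above delivers $49$.
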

\begin{proof}
	As seen in the identifiability condition (\ref{eqn: identifiability}), there are four values to analyze: the condition number $\kappa(w^*)$, the parameter $c$, the maximum degree $\Delta(\mathcal{G}_{m,n}(F))$, and the minimum degree $\delta(\mathcal{G}_{m,n}(B))$. We will first divide the proof into four separate computations, each dedicated to one of these values, then combine the final results at the end.
	
	\subsubsection{Condition number}
	\label{sec: condition_number}
	Since $\|u^*\|_2=\|v^*\|_2$ and $v^* = v_0^* 1_n$, we have
	\begin{equation*}
	d_md_nu_{\textup{min}}^{*2} \le \|u^*\|_2^2 = \sum_{k=1}^n v_k^{*2} = d_fv_0^{*2} \le d_md_nu_{\textup{max}}^{*2},
	\end{equation*}
	and so
	\begin{equation*}
	u_{\textup{min}}^* \le v_0^* \sqrt{\frac{d_f}{d_md_n}} \le u_{\textup{max}}^*.
	\end{equation*}
	Since $d_f=d_md_n$ by Assumption \ref{ass: conditions_for_identifiability}, we find $u_{\textup{min}}^* \le v_0^* \le u_{\textup{max}}^*$. This implies $w_{\textup{min}}^* = u_{\textup{min}}^*$ and $w_{\textup{max}}^* = u_{\textup{max}}^*$, and therefore $\kappa(w^*) = \frac{u_{\textup{max}}^*}{u_{\textup{min}}^*}$. Hence,
	\begin{equation*}
	1 \le \frac{v_0^*}{u_{\textup{min}}^*} \le \kappa(w^*).
	\end{equation*}
	Furthermore, since a background pixel must satisfy  $u_hv_k \in [X_{\textup{black}},X_{\textup{white}}]$ for all $(h,k)\in\Omega$, we have $u_{\textup{min}}^* v_0^* \ge X_{\textup{black}}$ and $u_{\textup{max}}^* v_0^* \le X_{\textup{white}}$, and therefore
	\begin{equation}
	\kappa(w^*) \le \frac{X_{\textup{white}}}{X_{\textup{black}}}. \label{eqn: condition_number_upper_bound}
	\end{equation}
	Taking $[X_{\textup{black}},X_{\textup{white}}] = [\Delta X,255 + \Delta X]$ with $\Delta X \in\mathbb{R}_{++}$, as in Remark \ref{rem: data_preprocessing}, we obtain $1 \le \kappa(w^*) \le 1 + \frac{255}{\Delta X}$. Therefore, taking $\Delta X$ large enough leads to
	\begin{equation}
	\kappa(w^*) \downarrow 1. \label{eqn: condition_number}
	\end{equation}
	
	\subsubsection{Parameter $c$}
	\label{sec: parameter_c}
	Notice that the identifiability condition (\ref{eqn: identifiability}) depends on a parameter $c$. This parameter is defined in \cite{fattahi_2018} to be a value in the interval $(0,1]$ such that the following holds:
	\begin{equation}
	\bar{S}_{\bar{h}\bar{k}} + w_{\bar{h}}^*w_{\bar{k}}^* > cw_{\textup{min}}^{*2}, \quad \bar{h},\bar{k}\in\{1,2,\dots,m+n\}, \label{eqn: parameter_c_inequality}
	\end{equation}
	where $\bar{X} = \bar{S} + ww^{\top}$ and
	\begin{equation*}
	\bar{S} = \begin{bmatrix}
	0_{m\times m} & S \\
	S^{\top} & 0_{n\times n}
	\end{bmatrix} \in\mathbb{R}^{(m+n)\times (m+n)}.
	\end{equation*}
	The elements of $\bar{X}$ therefore take on four forms:
	\begin{enumerate}
		\item $\bar{h},\bar{k}\le m$: We have $\bar{X}_{\bar{h}\bar{k}} = u_{\bar{h}}^*u_{\bar{k}}^* \ge u_{\textup{min}}^{*2}$.
		\item $\bar{h},\bar{k}>m$: We have $\bar{X}_{\bar{h}\bar{k}} = v_{\bar{h}-m}^* v_{\bar{k}-m}^* = v_0^{*2} \ge u_{\textup{min}}^{*2}$.
		\item $\bar{h}\le m < \bar{k}$: We have $\bar{X}_{\bar{h}\bar{k}} = (S+u^*v^{*{\top}})_{\bar{h},\bar{k}-m} = X_{\bar{h},\bar{k}-m} \ge X_{\textup{black}} = u_{\textup{min}}^*v_0^* \ge u_{\textup{min}}^{*2}$, where $u_{\textup{min}}^*v_0^*=X_{\textup{black}}$ by Assumption \ref{ass: conditions_for_identifiability}.
		\item $\bar{k}\le m < \bar{h}$: Analogous to the case above, we again find $\bar{X}_{\bar{h}\bar{k}} \ge u_{\textup{min}}^{*2}$.
	\end{enumerate}
	Since $\bar{S}_{\bar{h}\bar{k}} + w_{\bar{h}}^* w_{\bar{k}}^* = \bar{X}_{\bar{h}\bar{k}} \ge u_{\textup{min}}^{*2} = w_{\textup{min}}^{*2}$ for all $(\bar{h},\bar{k})$, we find that (\ref{eqn: parameter_c_inequality}) is satisfied for $c<1$. Therefore, we can choose
	\begin{equation}
	c \uparrow 1. \label{eqn: parameter_c}
	\end{equation}
	
	\subsubsection{Foreground graph}
	\label{sec: foreground_graph}
	Consider the graph $\mathcal{G}_{m,n}(F)$ and let us denote the degree of a vertex in $\mathcal{G}_{m,n}(F)$ as $\deg(\cdot,F)$. Note that $\deg(h,F),~h\in V_u=\{1,2,\dots,m\}$, exactly equals the number of frames in which pixel $h$ appears as foreground. Since, by Assumption \ref{ass: conditions_for_identifiability}, the maximum number of frames in which any single pixel appears as foreground is $p_f$ frames, we have
	\begin{equation*}
	\max\{\deg(h,F) : h\in V_u\} = p_f.
	\end{equation*}
	Next, we note that $\deg(m+k,F),~ (m+k)\in V_v = \{m+1,m+2,\dots,m+n\}$, exactly equals the number of foreground pixels in frame $k$. By Assumption \ref{ass: conditions_for_identifiability}, at least one frame contains the entire object, and therefore the maximum number of foreground pixels in any given frame is
	\begin{equation*}
	\max\{\deg(m+k,F) : (m+k) \in V_v\} = p_mp_n.
	\end{equation*}
	Therefore, we find that the maximum degree of the foreground graph becomes
	\begin{equation}
	\Delta(\mathcal{G}_{m,n}(F)) = \max\left\{ p_f , p_mp_n \right\}. \label{eqn: maximum_degree}
	\end{equation}
	
	\subsubsection{Background graph}
	\label{sec: background_graph}
	Consider the graph $\mathcal{G}_{m,n}(B)$ and let us denote the degree of a vertex in $\mathcal{G}_{m,n}(B)$ as $\deg(\cdot,B)$. Since $F$ and $G$ are complements with respect to $\Omega$, we have that $\mathcal{G}_{m,n}(F)$ and $\mathcal{G}_{m,n}(B)$ are bipartite complements of one another. Hence, it must be that
	\begin{align*}
	|V_u| ={}& \deg(m+k,F) + \deg(m+k,B), \\
	|V_v| ={}& \deg(h,F) + \deg(h,B),
	\end{align*}
	for all $h\in V_u$ and $(m+k)\in V_v$. This, together with the analysis of the foreground graph above, yields
	\begin{align*}
	\min\{ \deg(h,B) : h\in V_u \} ={}& d_f - p_f, \\
	\min\{ \deg(m+k,B) : (m+k)\in V_v \} ={}& d_md_n - p_mp_n.
	\end{align*}
	Therefore, we find that the minimum degree of the background graph becomes
	\begin{equation}
	\delta(\mathcal{G}_{m,n}(B)) = \min\left\{ d_f - p_f , d_md_n - p_mp_n \right\}. \label{eqn: minimum_degree}
	\end{equation}
	
	Combining the results of the four computations above by substituting (\ref{eqn: condition_number}), (\ref{eqn: parameter_c}), (\ref{eqn: maximum_degree}), and (\ref{eqn: minimum_degree}) into (\ref{eqn: identifiability}), we find that the identifiability condition is equivalent to
	\begin{equation}
	\min\{d_f-p_f , d_md_n - p_mp_n\} > 48\max\{p_f , p_mp_n\}. \label{eqn: conditions_for_identifiability_system_intermediate}
	\end{equation}
	Since $d_f=d_md_n \eqqcolon d$, we find $\min\{d_f-p_f , d_md_n - p_mp_n\} = d - \max\{p_f,p_mp_n\}$, so this gives
	\begin{equation*}
	d > 49\max\{p_f,p_mp_n\}.
	\end{equation*}
	This is equivalent to the proposed set of inequalities (\ref{eqn: conditions_for_identifiability_system}). Hence, the conditions we provide relating the video length to the size and speed of the object are seen to be necessary and sufficient, as desired.
\end{proof}

\begin{remark}[Constant trajectory]
	\label{rem: constant_trajectory}
	Take the special case of an object moving horizontally at a constant speed of $\dot{x}\in\mathbb{R}_{++}$ pixels per frame and vertically at a constant speed of $\dot{y}\in\mathbb{R}_{++}$ pixels per frame. Then, the number of frames in which any single pixel can be considered as foreground is no more than $\ceiling{\frac{p_n}{\dot{x}}}$, and is also no more than $\ceiling{\frac{p_m}{\dot{y}}}$. Assuming the object moves a sufficient distance so as to not obscure any part of the background for the entirety of the video, we have $d_f > \min\{\ceiling{\frac{p_n}{\dot{x}}},\ceiling{\frac{p_m}{\dot{y}}}\}$, so one of the two proposed bounds is active. Hence, the maximum number of frames in which a single pixel appears as foreground becomes
	\begin{equation}
	p_f = \min\left\{\ceiling*{\frac{p_n}{\dot{x}}}, \ceiling*{\frac{p_m}{\dot{y}}}\right\}, \label{eqn: constant_trajectory}
	\end{equation}
	giving bounds directly in terms of the object's size and speed.
\end{remark}

Theorem \ref{thm: rectangle_identifiability} provides us necessary and sufficient conditions to guarantee the satisfaction of the identifiability condition (\ref{eqn: identifiability}) in terms of a rectangular object's size and trajectory in relation to the resolution and length of the video. As one's intuition may predict, smaller rectangles and longer videos relax these conditions, indicating that videos with small moving objects and many frames are inherently easier to achieve globally optimal video segmentation. Together with Theorem \ref{thm: common_background_pixel}, we can provide deterministic guarantees that the optimization problem (\ref{eqn: nrpca_problem}) used to decompose a video has benign landscape, and that the resulting decomposition is unique and globally optimal. These concepts are showcased in the following video segmentation example.

\section{Numerical Experiments}
\label{sec: numerical_experiments}
In this section, we perform moving object segmentation via NRPCA on an example video in an effort to corroborate the two main results given in Theorems \ref{thm: common_background_pixel} and \ref{thm: rectangle_identifiability}. For this experiment, we recorded five minutes of surveillance video on the UC Berkeley campus, instructing volunteer human subjects to walk up and down a set of stairs, acting as the moving object to segment. The original video is $d_f = 19623$ frames long with a resolution of $d_m = 1920$ pixels by $d_n = 1080$ pixels. We preprocess the video data so that $d_md_n=d_f$, as described in Assumption \ref{ass: conditions_for_identifiability} and Remark \ref{rem: data_preprocessing}. Therefore, the NRPCA problem takes on approximately $2d_f$ variables, whereas the popular convex PCP segmentation approach would optimize over an astronomical $d_f^2$ variables after lifting the problem to higher dimensions. We also shift the pixel values in $X$ from $\{0,1,\dots,255\}$ to the interval $\mathcal{X}=[5000,5255]$ in order to guarantee $\kappa(w^*)$ is sufficiently close to unity (in this case, $\kappa(w^*)\in[1,1.05]$ by (\ref{eqn: condition_number_upper_bound})).

In order to solve the NRPCA problem, we set $\lambda=1$ and initialize a point $w_0 = (u_0, 1_n)$, with each element of $u_0\in\mathbb{R}_{++}^m$ drawn randomly from the half-normal distribution. We then iterate using stochastic gradient descent with a learning rate of $\alpha=10^{-4}$ and momentum coefficient of $\beta=0.9$, with a projection onto the nonnegative orthant at each step of the algorithm. We find that $5000$ iterations of this algorithm takes under $4$ seconds to solve the problem on a standard laptop, and that convergence within a small neighborhood is consistently achieved. We now demonstrate the use of our main results in this experiment.

Three frames from the video are shown in the columns of Fig.\ \ref{fig: sufficient_conditions}. These frames have been cropped vertically (but not horizontally) after performing the video segmentation, in order to enlarge the moving object and to save space. An \textit{a priori} estimate of the relative object size, for example in the uncropped version of frame (c) in Fig.\ \ref{fig: sufficient_conditions}, shows that a rectangle of $p_m \approx \frac{1}{7}d_m$ and $p_n\approx \frac{1}{8}d_n$ should embed the object. Furthermore, the speed at which the subjects walk up the stairs is approximated to be $\dot{x} \approx \frac{1}{500}d_n$ pixels per frame, and therefore (\ref{eqn: constant_trajectory}) can be used to approximate $p_f$ for a single trajectory up the stairs. For the full video (with multiple trajectories up and down), we estimate this value to be $p_f \approx \frac{1}{70}d_f$. Hence, Theorem \ref{thm: rectangle_identifiability} is satisfied by our approximations. Furthermore, it is clear that Theorem \ref{thm: common_background_pixel} applies, and therefore we expect both conditions (\ref{eqn: connectivity}) and (\ref{eqn: identifiability}) to be satisfied, yielding global optimality guarantees for our video segmentation. Applying the preprocessing and solution method described in Remark \ref{rem: data_preprocessing}, we solve for $w^*$. An \textit{a posteriori} computation gives the true values of $p_mp_n = \frac{1}{100}d_md_n$ and $p_f = \frac{1}{72}d_f$, satisfying (\ref{eqn: conditions_for_identifiability_system}) as expected. The original identifiability condition (\ref{eqn: identifiability}) is also found to be satisfied with $\delta(\mathcal{G}_{m,n}(B))=19352$, $\Delta(\mathcal{G}_{m,n}(F))=271$, and $\kappa(w^*)=1.05$. The resulting segmentation defined by $w^*$ is shown in Fig.\ \ref{fig: sufficient_conditions}.

\begin{figure}[ht]
	\centering
	\subfloat{%
		\centering
		\frame{\includegraphics[width=0.32\linewidth]{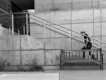}}%
	}
	\hfil
	\subfloat{%
		\centering
		\frame{\includegraphics[width=0.32\linewidth]{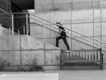}}%
	}
	\hfil
	\subfloat{%
		\centering
		\frame{\includegraphics[width=0.32\linewidth]{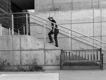}}%
	}
	\\
	\vspace*{-\baselineskip}
	\vspace*{0.02\linewidth}
	\subfloat{%
		\centering
		\frame{\includegraphics[width=0.32\linewidth]{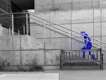}}%
	}
	\hfil
	\subfloat{%
		\centering
		\frame{\includegraphics[width=0.32\linewidth]{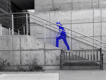}}%
	}
	\hfil
	\subfloat{%
		\centering
		\frame{\includegraphics[width=0.32\linewidth]{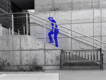}}%
	}
	\\
	\vspace*{-\baselineskip}
	\vspace*{0.02\linewidth}
	\subfloat{%
		\centering
		\frame{\includegraphics[width=0.32\linewidth]{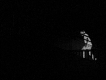}}%
	}
	\hfil
	\subfloat{%
		\centering
		\frame{\includegraphics[width=0.32\linewidth]{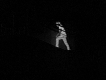}}%
	}
	\hfil
	\subfloat{%
		\centering
		\frame{\includegraphics[width=0.32\linewidth]{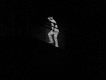}}%
	}
	\\
	\vspace*{-\baselineskip}
	\vspace*{0.02\linewidth}
	\setcounter{subfigure}{0}	
	\subfloat[]{%
		\centering
		\frame{\includegraphics[width=0.32\linewidth]{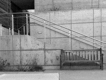}}%
	}
	\hfil
	\subfloat[]{%
		\centering
		\frame{\includegraphics[width=0.32\linewidth]{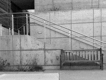}}%
	}
	\hfil
	\subfloat[]{%
		\centering
		\frame{\includegraphics[width=0.32\linewidth]{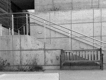}}%
	}
	\caption{Three frames, (a), (b), and (c), of the stair walking video. The first row shows the original frames with the second row showing a color overlay of their segmentations. The third and fourth rows show each frame's foreground mask and its extracted background, respectively. The performance is visually pleasing, and the algorithm accurately learned the ``difficult'' parts of the background, e.g., the bench and handrail, which create thin lines obscuring parts of the subject and can be difficult to distinguish from the moving object even by eye.}
	\label{fig: sufficient_conditions}
\end{figure}

To empirically validate the theoretical absence of spurious local solutions in this segmentation problem, we ran the optimization $N=1000$ times, obtaining the solution set $\mathcal{W}^* = \{w^{*(1)},w^{*(2)},\dots,w^{*(N)}\}$. Each run of the stochastic gradient descent algorithm used a random initial condition $w_0 = (u_0, 1_n)$ with the elements of $u_0$ again drawn from the half-normal distribution. We found the maximum relative distance between the resulting solutions to be
\begin{equation*}
\max\left\{ \frac{\|w^* - w\|_2}{\|w^*\|_2} : w\in\mathcal{W}^* \right\} = 0.0023,
\end{equation*}
indicating a 100\% success rate at converging to the same minimum $w^*$, as expected.

\section{Conclusions}
\label{sec: conclusions}
In this paper, we study the unsupervised extraction of a video's moving objects from its background via nonnegative robust principal component analysis. Although the optimization problem of interest is nonconvex, it exhibits benign landscape under certain criteria. We exploit this fact to develop conditions under which the video segmentation is unique and globally optimal. We derive these global optimality guarantees in terms of intuitive and meaningful parameters, such as the size and speed of the moving objects, as well as the length of the video. Furthermore, real video examples are given to illustrate the use of these criteria, and in what scenarios the problem's benign landscape holds.


\bibliographystyle{IEEEtran}
\bibliography{video_segmentation}

\end{document}